\documentclass[final]{cvpr}

\usepackage{times}
\usepackage{epsfig}
\usepackage{graphicx}
\usepackage{amsmath}
\usepackage{amssymb}

\usepackage{amsthm}

\usepackage{float}
\usepackage{multirow}

\def\R{{\mathbb R}}
\def\Diff{{\operatorname{Diff}}}

\hyphenation{op-tical net-works semi-conduc-tor}
\usepackage{url}

\newtheorem{theorem}{Theorem}

\usepackage[normalem]{ulem}


\usepackage[pagebackref=true,breaklinks=true,colorlinks,bookmarks=false]{hyperref}



\begin{document}

\title{Supervised deep learning of elastic SRV distances on the shape space of curves}

\author{Emmanuel Hartman$^{1}$, Yashil Sukurdeep$^{2}$, Nicolas Charon$^{2}$, Eric Klassen$^{1}$, Martin Bauer$^{1}$\thanks{M. Bauer and E. Hartman  were supported by NSF grants 1953244 and 1912037. Y. Sukurdeep and N. Charon were supported by NSF grants 1945224 and 1953267. }\\\
Department of Mathematics, Florida State University, Tallahassee, USA$^{1}$\\
Department of Applied Mathematics, Johns Hopkins University, Baltimore, USA$^{2}$\\
{\tt\small \{ehartman,klassen,bauer\}@math.fsu.edu, yashil.sukurdeep@jhu.edu,charon@cis.jhu.edu}
}


\maketitle

\begin{abstract}
Motivated by applications from computer vision to bioinformatics, the field of shape analysis deals with problems where one wants to analyze geometric objects, such as curves, while ignoring actions that preserve their shape, such as translations, rotations, scalings, or reparametrizations. Mathematical tools have been developed to define notions of distances, averages, and optimal deformations for geometric objects. One such framework, which has proven to be successful in many applications, is based on the square root velocity (SRV) transform, which allows one to define a computable distance between spatial curves regardless of how they are parametrized. This paper introduces a supervised deep learning framework for the direct computation of SRV distances between curves, which usually requires an optimization over the group of reparametrizations that act on the curves. The benefits of our approach in terms of computational speed and accuracy are illustrated via several numerical experiments on both synthetic and real data. 
\end{abstract}

\section{Introduction}
\label{sec:introduction}
Motivated by applications from computer vision to bioinformatics, the field of elastic shape analysis deals with problems where one needs to analyze the variability of geometric objects ~\cite{srivastava2010shape}, \cite{amor2015action}, \cite{su2014statistical}, \cite{marron2015functional}, \cite{dai2019analyzing}. In this article, we address the computation of elastic geodesic distances between geometric curves in one and higher dimensions. By geometric curves, we mean curves modulo shape-preserving transformations, i.e., curves whose images are equal up to translations, rotations, scalings and reparametrizations. Mathematically, we model the space of all geometric curves as a quotient space of the set of absolutely continuous curves. In Section~\ref{sec:intrSRV}, we  discuss the construction of this space, and how the square root velocity (SRV) transform allows us to define the so-called SRV distance between geometric curves. For additional details, we refer interested readers to the the vast literature on elastic metrics ~\cite{srivastava2010shape},~\cite{mio2007shape},~\cite{younes1998computable},~\cite{srivastava2016functional},~\cite{bauer2014overview}.

\subsection{Background \& Related Work}
\label{ssec:background_and_related_work}
The SRV distance \cite{srivastava2010shape} quantifies dissimilarity between geometric curves, and can be used for averaging, classifying and clustering datasets of functions or curves, which are prevalent tasks in fields such as computer vision and medical imaging. However, computing SRV distances between pairs of geometric curves is a nontrivial endeavor, as it typically necessitates solving an infinite dimensional optimization problem over the set of reparametrizations that act on the curves. These optimal reparametrizations are guaranteed to exist for certain classes of curves, such as $C^1$-curves~\cite{BruverisOptimaMat}, and piecewise linear curves~\cite{lahiri2015precise}. The beauty of the result in~\cite{lahiri2015precise} lies in the fact that it not only provides an existence result, but also describes an algorithm to explicitly construct optimal reparametrizations for piecewise linear curves. Although this algorithm allows one to compute \textit{exact} SRV distances, it has a high polynomial complexity, rendering it impractical for large datasets that are typically encountered in applications.

Consequently, faster approaches have emerged to compute SRV distances in practical contexts \cite{huang2016riemannian},~\cite{srivastava2010shape}. Several such algorithms rely on dynamic programming (DP) to approximate optimal reparametrizations, operating by searching over subsets of all possible reparametrizations. The different DP-based approaches achieve runtimes of $O(n)$ to $O(n^3)$, where $n$ is the number of samples used to discretize the curves, thus providing fast over-estimates of the true SRV distance~\cite{Bernal_2016_CVPR_Workshops},~\cite{Dogan_2015_CVPR},~\cite{srivastava2010shape}. Nevertheless, DP also incurs a significant computational cost when working with very large datasets, implying that there is a need to develop increasingly efficient approaches for handling modern datasets of shapes. Towards that end, deep learning (DL) approaches have recently been introduced to estimate optimal reparametrizations for functions and curves, including supervised~\cite{lohit2019temporal},~\cite{Nunez_2020_CVPR_Workshops} and unsupervised~\cite{unsupervised},~\cite{seetharam2019structured} methods. In particular, Nunez and Joshi train a convolutional neural network (CNN) on approximate reparametrizations obtained from DP in order to estimate optimal reparametrizations for functions and curves~\cite{Nunez_2020_CVPR_Workshops}.

\subsection{Contributions}
\label{ssec:contributions}
In this paper, we propose a supervised DL framework for \textit{directly} estimating SRV distances between functions and between spatial curves, \textit{without the need to estimate optimal reparametrizations}. More specifically, we train a deep CNN to learn SRV distances, using training data consisting of pairs of discretized functions or curves, together with the SRV distance between them as labels.

As a theoretical contribution that is of interest on its own, we extend the existence results for optimal reparametrizations of~\cite{lahiri2015precise,BruverisOptimaMat} to the space of closed curves, and to the spaces of (open or closed) curves modulo rotations. These results were previously only known for open curves, and also did not consider the action of the rotation group. This in turn allows us to directly generalize the algorithm of~\cite{lahiri2015precise} for calculating exact SRV distances for open or closed curves modulo rotations.

Consequently, and in contrast to e.g.~\cite{Nunez_2020_CVPR_Workshops}, we use these \textit{exact} SRV distances as training labels for our network, rather than SRV distance over-estimates computed via DP. This reduces bias in the network's predictions. Another distinct feature of our framework is that unlike the aforementioned DP and DL approaches, we bypass the need to estimate optimal reparametrizations, instead directly estimating SRV distances. This is especially convenient for certain unsupervised learning tasks involving datasets of shapes, such as clustering applications with curves, where one only needs rapidly-computed pairwise distances rather than optimal reparametrization maps between the shapes. 

Moreover, using a neural network to predict a single number (the SRV distance) rather than a full reparametrization map is an obviously less complex learning problem, and can thus be achieved with a smaller training set, which is advantageous in the context of shape analysis where vast amounts of publicly available data are not as readily available as in e.g., the imaging sciences. Furthermore, a fundamental property of the SRV distance is its invariance to parametrizations, which we leverage to introduce a \textit{shape-preserving data augmentation} training strategy, outlined in Section~\ref{sec:DLSRV}. This training strategy allows us to augment the size of our training set while also improving the variability within training samples, which ultimately allows the trained network to produce robust, parametrization-invariant SRV distance estimates.

We are also providing an open-source version of the code for our DL framework, which is publicly available on github at \href{https://github.com/emmanuel-hartman/supervisedDL-SRVFdistances}{https://github.com/emmanuel-hartman/supervisedDL-SRVFdistances}.

To illustrate our DL framework's benefits, we show that our trained CNN's SRV distance estimates are comparable to or even more accurate than DP distances, while also being orders of magnitude faster in terms of computation time.

\section{The Shape Space and SRV Distance} 
\label{sec:intrSRV}

We begin with a brief overview of the square root velocity (SRV) framework for defining a computable elastic distance between geometric curves. In this framework, we start with parametrized curves, modelled as elements of the space of vector-valued absolutely continuous functions, denoted $\operatorname{AC}(M,\mathbb{R}^d)$. We have open curves if the parameter space $M$ is the unit interval $[0,1] \subset \R$, and closed curves if $M$ is the unit circle $S^1$. When $d=1$, we have one-dimensional curves, which we call functions.

In shape analysis for curves, one is interested in the space of all geometric curves, i.e., parametrized curves whose images are equal up to translations, rotations and reparametrizations. Note that the SRV framework, and thus our DL approach, can easily be extended to handle curves modulo scalings as well. We now briefly outline the construction of the space of all geometric curves.

To identify parametrized curves that only differ by a translation, we work with the space of absolutely continuous curves such that $c(0)=0$, denoted by $\operatorname{AC}_0(M,\mathbb{R}^d)$. We will later see that the SRV distance is naturally defined on this linear subspace of all absolutely continuous curves. Identifying curves that only differ by a reparametrization or rotation is a more delicate matter and the main source of complication in shape analysis. This is accomplished by defining the following equivalence relation for parametrized curves $c_1, c_2 \in \operatorname{AC}_0(M,\mathbb{R}^d)$. For open curves, we define $c_1\thicksim c_2$ if and only if they have the same unit speed parametrization after the application of an appropriate rotation. For closed curves, we define $c_1\thicksim c_2$ if and only if they have the same unit speed parametrization after an appropriate choice of ``starting point" and an appropriate rotation.\footnote{For functions, i.e., $d=1$, the rotation group is trivial and thus the equivalence relation reduces to factoring out the reparametrization action only.} We denote the equivalence class of a curve $c$ under this relation by $[c]$. We then define the space of all geometric curves as the quotient space:
\begin{align*}
    \mathcal{S}(M,\R^d) = \operatorname{AC}_0(M,\mathbb{R}^d) / \thicksim ~,
\end{align*}
and will refer to it as the shape space of curves, or simply as the shape space for brevity when there is no ambiguity.

We now outline how the SRV transform allows us to define a distance function on this shape space. The SRV transform is the mapping $Q: \operatorname{AC}_0(M,\mathbb{R}^d) \rightarrow L^2(M,\mathbb{R}^d)$, defined by:
\begin{align*}
    c(\cdot) \mapsto Q(c)(\cdot) := \begin{cases}\frac{c'(\cdot)}{\sqrt{|c'(\cdot)|}}& \text{if } |c'(\cdot)|>0, \\ 0 & \text{otherwise.} \end{cases}
\end{align*}
Here, $c'$ denotes the first derivative of the parametrized curve $c \in \operatorname{AC}_0(M,\mathbb{R}^d)$. This transform allows us to define the SRV distance between parametrized curves $c_1, c_2 \in \operatorname{AC}_0(M,\mathbb{R}^d)$ by pulling back the $L^2$ metric on $L^2(M,\mathbb{R}^d)$ as follows:
\begin{align*}
    d_Q(c_1,c_2)^2 
    &:= \|Q(c_1) - Q(c_2)\|_{L^2}^2 \\
    &= \int_M \Bigg| \frac{c_1'(t)}{\sqrt{|c_1'(t)|}} - \frac{c_2'(t)}{\sqrt{|c_2'(t)|}} \Bigg|^2 dt.
\end{align*}
 
It is worth noting that in the case of open curves, this distance can be interpreted as the geodesic distance induced by a Riemannian metric. For closed curves, it is only a first order approximation of a geodesic distance. The key property of this distance is its invariance under both the action of the group of rotations $\operatorname{SO}(d)$, and that of the group of diffeomorphisms of the parameter space $\Diff(M)$. The latter can be seen by a simple change of variables in the above integral. Thus, the SRV distance descends to a distance on the quotient shape space $\mathcal{S}(M,\R^d)$, given by:
\begin{align}
    \label{eq:SRVdist}
    d_{\mathcal{S}}([c_1],[c_2])
    = \inf \limits_{\substack{\gamma \in \Diff(M) \\ \operatorname{O} \in \operatorname{SO}(d)}} d_{Q}\left(c_1, \operatorname{O} \star \big( c_2 \circ \gamma \big) \right).
\end{align}

With a slight abuse of terminology, we henceforth refer to the quotient space distance, namely $d_{\mathcal{S}}(\cdot,\cdot)$, as the SRV distance. It follows that computing the SRV distance between geometric curves involves solving a joint optimization problem over the finite dimensional group $\operatorname{SO}(d)$ and the infinite dimensional reparametrization group $\Diff(M)$. The main challenge is the minimization over $\Diff(M)$, which is usually accomplished by discretizing the group into a finite dimensional approximation space, and solving the discretized problem via a dynamic programming approach. 

It is important to note that in general, the existence of a reparametrization $\gamma \in \Diff(M)$ attaining the infimum in \eqref{eq:SRVdist} is not guaranteed. However, under some additional regularity assumptions on the curves $c_1, c_2$, one can recover such existence results. In the following, we discuss the existence of optimal reparametrizations and rotations in~\eqref{eq:SRVdist}, both for the case of open curves (i.e., $M = [0,1]$) as well as closed curves (i.e., $M = S^1$).

We first introduce the semi-group of generalized reparametrizations for open curves:
\begin{equation*}
\bar \Gamma ([0,1])=\left\{\gamma \in \operatorname{AC}([0,1],[0,1]):\gamma\text{ is onto; } \gamma'\geq 0 \text{ a.e.}\right\}.    
\end{equation*} 
To introduce the analogous construction for the case of closed curves, we view $S^1$ as $\mathbb{R}/\mathbb{Z}$. We then define the shift operator on $S^1$ via:
\begin{equation*}
    S_\theta:S^1\to S^1;\qquad \lambda\mapsto \lambda+\theta.
\end{equation*}
This allows us to define the semi-group of generalized reparametrizations on $S^1$ via:
\begin{equation*}
\bar \Gamma (S^1)=\left\{S_\theta\circ \gamma^*: \theta\in S^1 \text{ and }  \gamma^*\in  \bar \Gamma ([0,1])  \right\}.    
\end{equation*} 
This allows us to formulate the following existence result, which is the main theoretical contribution of the present article:
\begin{theorem}\label{thm:existence}
Let $c_1,c_2\in \operatorname{AC}(M,\mathbb R^d)$ such that either both are of class $C^1$, or at least one of them is piecewise linear. Assume also that $c_1'$ and $c_2'$ are both nonzero a.e. on $M$. Then there exists a pair of generalized reparametrization functions $(\gamma_1,\gamma_2)\in \bar \Gamma (M) \times \bar \Gamma (M)$ and a rotation $O \in \operatorname{SO}(d)$ such that:
\begin{equation*}
 d_{\mathcal{S}}([c_1],[c_2])  = d_{Q}(c_1\circ\gamma_1,O\star(c_2\circ\gamma_2)).
\end{equation*}
\end{theorem}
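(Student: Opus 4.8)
The plan is to reduce the statement to the existing existence theorems for open curves without rotations — \cite{BruverisOptimaMat} in the $C^1$ case and \cite{lahiri2015precise} in the piecewise linear case — by absorbing the two new ingredients, the $\operatorname{SO}(d)$-action and, for closed curves, the choice of starting point, using that the groups $\operatorname{SO}(d)$ and $S^1$ are compact. First I would record the standard reformulation: by a density argument \cite{BruverisOptimaMat,lahiri2015precise} (every element of $\bar\Gamma(M)$ is, after applying $Q$, an $L^2$-limit of diffeomorphisms, and $d_Q$ is $\Diff(M)$-invariant), the infimum in \eqref{eq:SRVdist} is unchanged if one reparametrizes \emph{both} curves by generalized reparametrizations,
\begin{equation*}
d_{\mathcal S}([c_1],[c_2]) \;=\; \inf_{\gamma_1,\gamma_2\in\bar\Gamma(M),\ O\in\operatorname{SO}(d)} d_Q\big(c_1\circ\gamma_1,\, O\star(c_2\circ\gamma_2)\big),
\end{equation*}
so it suffices to prove this infimum is attained. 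Expanding the squared norm and changing variables by $u=\gamma_i(t)$ (legitimate since the $\gamma_i$ are absolutely continuous) rewrites $d_Q(c_1\circ\gamma_1,O\star(c_2\circ\gamma_2))^2=\ell_1+\ell_2-2J(\gamma_1,\gamma_2,O)$ with $\ell_i=\|Q(c_i)\|_{L^2}^2$ the length of $c_i$ and $J(\gamma_1,\gamma_2,O)=\int_M \sqrt{\gamma_1'\gamma_2'}\,\langle Q(c_1)\!\circ\!\gamma_1,\,O\,Q(c_2)\!\circ\!\gamma_2\rangle\,dt$ bounded, so we equivalently seek a maximizer of $J$.

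Next I would take a maximizing sequence $(\gamma_1^n,\gamma_2^n,O^n)$ for $J$ and freeze the compact-group variables one at a time. After passing to a subsequence, $O^n\to O$ in $\operatorname{SO}(d)$; since $\|O^n\star v-O\star v\|\le\|O^n-O\|\,\|v\|$ (operator norm) and $\|Q(c_2\circ\gamma_2^n)\|_{L^2}^2=\ell_2$ is independent of $n$, replacing $O^n$ by $O$ changes $J$ by $o(1)$, uniformly in $(\gamma_1^n,\gamma_2^n)$. In the closed case, write $\gamma_i^n=S_{\theta_i^n}\circ\gamma_i^{*,n}$ with $\theta_i^n\in S^1$ and $\gamma_i^{*,n}\in\bar\Gamma([0,1])$, and pass to a further subsequence with $\theta_i^n\to\theta_i$. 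The key point is that $Q(c_i\circ S_\theta)=Q(c_i)(\cdot+\theta)$ is the $L^2$-translate of $Q(c_i)\in L^2(S^1,\R^d)$, so continuity of translation on $L^2$ together with the change of variables $\int_0^1|g(\gamma(t))|^2\gamma'(t)\,dt=\int_0^1|g|^2$ for $\gamma\in\bar\Gamma([0,1])$ yields $\|Q(c_i\circ S_{\theta_i^n}\circ\gamma)-Q(c_i\circ S_{\theta_i}\circ\gamma)\|_{L^2}=\|Q(c_i\circ S_{\theta_i^n})-Q(c_i\circ S_{\theta_i})\|_{L^2}\to0$ uniformly over $\gamma\in\bar\Gamma([0,1])$; hence the starting points, like the rotation, may be frozen at their limits. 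After these reductions we are left — in the open case with $S_{\theta_i}=\mathrm{id}$, in the closed case with $\theta_i$ as above — with a maximizing sequence $(\gamma_1^{*,n},\gamma_2^{*,n})$ in $\bar\Gamma([0,1])^2$ for the rotation-free matching problem between the two \emph{fixed} open curves $\tilde c_1=c_1\circ S_{\theta_1}$ and $\tilde c_2=O\star(c_2\circ S_{\theta_2})$, which are still $C^1$ (resp.\ still have at least one of them piecewise linear) and still have a.e.\ nonvanishing derivative. Applying \cite{BruverisOptimaMat} (resp.\ \cite{lahiri2015precise}) to this problem produces optimal $(\gamma_1^*,\gamma_2^*)\in\bar\Gamma([0,1])^2$, and then $(S_{\theta_1}\circ\gamma_1^*,\,S_{\theta_2}\circ\gamma_2^*,\,O)$ realizes the infimum, which is the assertion of the theorem.

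The step I expect to require the most care is this last reduction: \cite{BruverisOptimaMat,lahiri2015precise} treat a fixed pair of curves, whereas the target curves $c_i\circ S_{\theta_i^n}$ and the rotation $O^n$ vary along the minimizing sequence before the compact-group variables are frozen. The uniform-in-$\gamma$ translation and operator-norm estimates above are exactly what make this harmless: they decouple the finite-dimensional variables $(\theta_1,\theta_2,O)$ from the infinite-dimensional reparametrizations, so that the core compactness and lower-semicontinuity argument of \cite{BruverisOptimaMat,lahiri2015precise} — e.g.\ normalizing the pair by $\gamma_1^*+\gamma_2^*=2\,\mathrm{id}$ to extract an Arzel\`a--Ascoli limit, then using the regularity hypotheses on the curves to keep the limit inside $\bar\Gamma([0,1])$ and to pass to the limit in $J$ — can be run for the fixed limit curves $\tilde c_1,\tilde c_2$, the $o(1)$ perturbations being harmless since $J$ is continuous in $(\theta_1,\theta_2,O)$ uniformly over reparametrizations. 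A minor point to settle first is that a cyclic shift of a piecewise linear closed curve, and a rotation of it, is again piecewise linear with a.e.\ nonvanishing derivative, so that the hypothesis of \cite{lahiri2015precise} is genuinely inherited by $\tilde c_1,\tilde c_2$.
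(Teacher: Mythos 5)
Your proposal is correct and follows essentially the same route as the paper: it reduces the problem to the known open-curve results of \cite{BruverisOptimaMat,lahiri2015precise} by exploiting compactness of $S^1\times\operatorname{SO}(d)$ together with the same key estimates (reparametrization-invariance of the $L^2$ norm, $L^2$-continuity of shifts, and the operator-norm bound for rotations), and then invokes the cited theorems for the fixed limit curves. The only difference is packaging: you freeze the compact-group variables along a minimizing sequence, whereas the paper proves continuity of the partially minimized function $F(\lambda,O)$ and minimizes it by compactness — the underlying uniform-in-$\gamma$ estimates are identical.
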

Previously this result was only known for the space of open curves and did not consider the action of the rotation group, see~\cite{lahiri2015precise,BruverisOptimaMat}. The proof of Theorem~\ref{thm:existence}, which builds up on these results, is postponed to the appendix.

For piecewise linear curves $c_1, c_2 \in \operatorname{AC}_0(M,\mathbb{R}^d)$, the results of~\cite{lahiri2015precise} and Theorem~\ref{thm:existence} even lead to an algorithm that allows us to explicitly construct these optimal reparametrizations for calculating the {\bf precise} quotient space distance, see~\cite{lahiri2015precise}. This algorithm plays a fundamental role in our proposed DL framework, as we use it to calculate exact quotient SRV distances in order to generate labels for our training data, as will be outlined in the next section.

\section{Deep Learning of SRV Distances}
\label{sec:DLSRV}

While the algorithm in~\cite{lahiri2015precise} allows us to compute exact SRV distances, it is computationally expensive, making it impractical for working with large datasets of shapes. Consequently, there is a need to develop approaches that are more computationally efficient in order to calculate SRV distances. We address this need by introducing a supervised DL framework that provides fast, accurate and robust SRV distance estimates.
 
\subsection{Network Architecture} 
\begin{figure*}[hbtp!] 
    \centering
        \includegraphics[width=.8\textwidth]{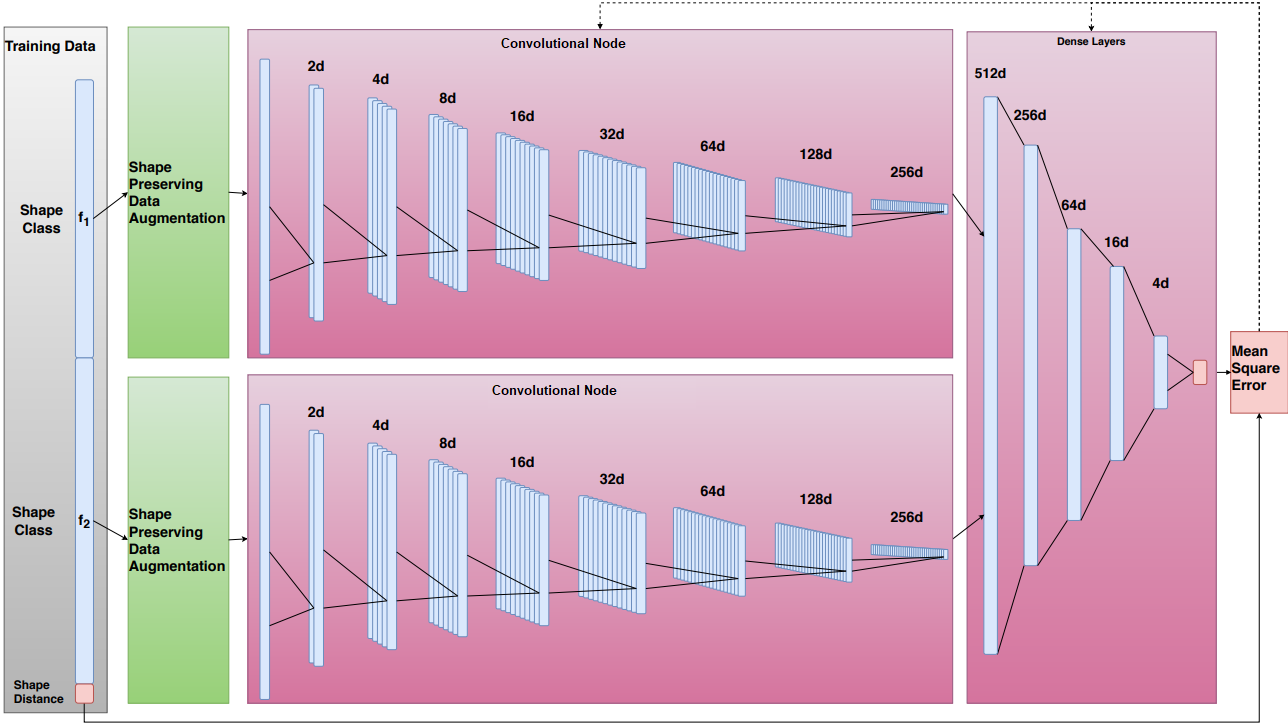}
    \caption{\small Training step and network structure diagram for Shape Preserving Data Augmentation based training: Weights contained in the red blocks are trainable and the Siamese convolutional nodes have shared weights. Specific parameter details of the network architecture can be found in Section \ref{secc:network_architecture}. The green blocks perform shape preserving data augmentation as described in Section \ref{sec:trainingmethod}.
    \label{NetStructure}}
\end{figure*}

\label{secc:network_architecture}
We train a Siamese convolutional neural network (CNN) to learn the SRV distance between geometric curves. We use training data consisting of pairs of discretized $\R^d$ valued curves, together with their SRV distance as labels. Each individual curve is sampled at $n$ vertices and represented as a flattened vector of length $n \times d$, before being fed as input to the network. Our Siamese CNN has a twin structure, consisting of two components which have identical architectures and use the same weights. To be more specific, each component of the CNN operates on an individual discretized curve, which is passed through a series of convolutional layers with kernels of size 5, followed each time by batch normalization, a rectified linear unit (ReLU) activation, and a max-pooling layer with pool size $2$. This produces two outputs, which are concatenated and passed through four dense layers whose widths are proportional to $d$, with ReLU activations being used in each dense layer. The network then outputs a single real number: the SRV distance between the two curves. We provide a schematic description of the network architecture in Fig. \ref{NetStructure}. 

\subsection{Training Method} 
\label{sec:trainingmethod}

\begin{figure}[htbp!]
\begin{center}
    \includegraphics[width=0.27\textwidth,angle=90]{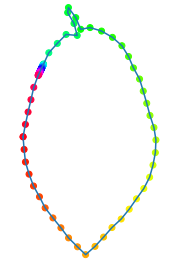}
    \includegraphics[width=.1\textwidth]{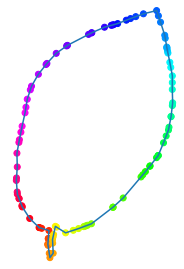}
    \includegraphics[width=.15\textwidth]{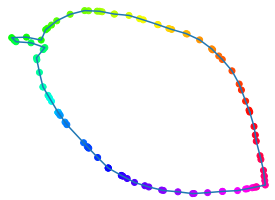}
    \includegraphics[width=.12\textwidth]{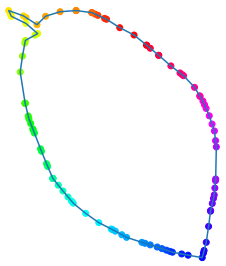}
    \end{center}
        \caption{\small Example of Shape Preserving Data Augmentation: The top curve is an example of a parameterization of a curve from the Swedish Leaf II dataset, see Section~\ref{sec:numerical_experiments} for a description of this dataset. The three curves on the bottom represent parameterizations and rotations of this curve as produced by the shape preserving data augmentation described in Section \ref{sec:trainingmethod}.}
        \label{fig:SPDA}
\end{figure}

We create training and testing sets for our network by randomly generating pairs of functions or curves, or by picking them from an existing dataset, and labelling them with their SRV distance. We use exact distances computed with the algorithm of \cite{lahiri2015precise} as labels for functions and 2D curves, but due to this algorithm's high complexity, we instead use DP distances as labels for 3D curves.

Computing these SRV distance labels using the exact algorithm or DP may be very time consuming, which could limit the size of our training set in practice. Thankfully, from a base training set, one can easily generate more training samples at no extra cost by applying shape-preserving transformations, such as resampling and rotations, to both curves. Indeed, the quotient SRV distance is invariant to reparametrizations (i.e., to resampling in the discrete situation) and to rotations, implying that the distance between resampled and/or rotated curves remains unchanged and need not be recomputed. This data augmentation strategy allows the network to see a wider variety of sampling patterns and rotations for the same curve during training, which helps it to learn and predict distances that are truly invariant to reparametrizations and rotations. Moreover, since resampling and rotating curves is computationally inexpensive, this procedure can be performed at each iteration of the training step, without incurring any additional storage for new training samples. We empirically observed that this \textit{shape-preserving data augmentation}-based training method reduced overfitting in the distance learned by the network.

The training itself is performed using an Adam optimization procedure \cite{kingma2014adam}. We observed relatively fast convergence in all cases, with convergence curves shown in Fig.~\ref{fig:convergence_curves}. We refer readers to the code documentation on \href{https://github.com/emmanuel-hartman/supervisedDL-SRVFdistances}{https://github.com/emmanuel-hartman/supervisedDL-SRVFdistances} for further training details, including information on the exact training parameters such as the batchsize for each epoch of training, and the parameters involved in the shape-preserving data augmentation training step.

\begin{figure}[h!]
\begin{center}
    \includegraphics[width=0.23\textwidth]{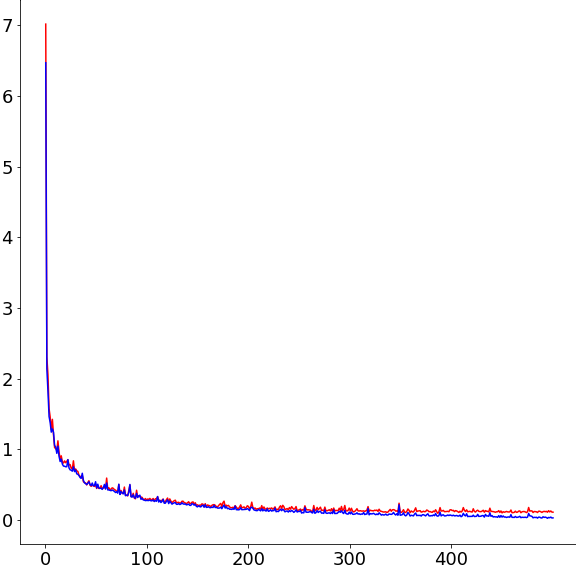}
    \includegraphics[width=0.23\textwidth]{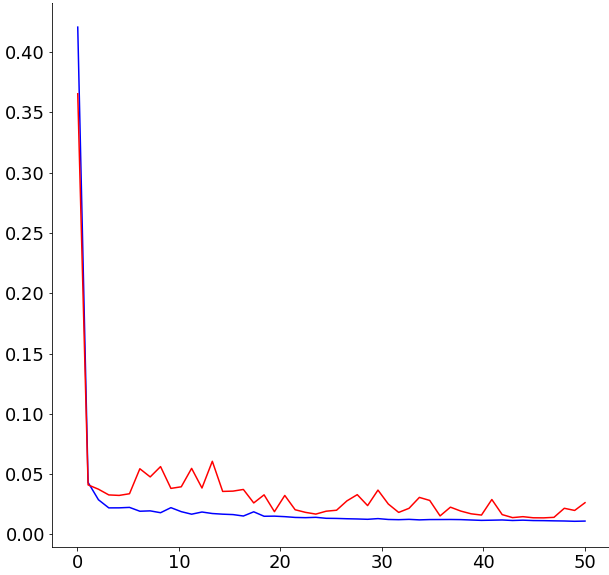}
    \end{center}
        \caption{\small On both figures, the $x$-axis represents epochs, and on the $y$-axis, we plot the mean squared error of the network on the training data (blue), as well as on unseen testing data (red). Convergence curves for network trained on open, real-valued functions discretized at 90 points from our Synthetic I data set, trained for 500 epochs (left figure). Convergence curves for network trained on closed, 2-dimensional curves discretized at 100 points from the Kimia dataset, trained for 50 epochs (right figure). Descriptions of the datasets are given in Section~\ref{sec:numerical_experiments}.}
        \label{fig:convergence_curves}
\end{figure}

\section{Numerical Experiments}
\label{sec:numerical_experiments}

We now present empirical results demonstrating the performance of our DL approach for estimating SRV distances on real-valued functions, and on curves in $\R^2$ and $\R^3$. As we shall see, the experiments show that when compared to DP, our approach produces SRV distance estimates at a significantly lower numerical cost, while being comparable, and sometimes superior, in terms of accuracy.

\subsection{Computation Method}
To compute SRV distances using the exact algorithm and DP, we used Martins Bruveris' package\footnote{\href{https://github.com/martinsbruveris/libsrvf}{https://github.com/martinsbruveris/libsrvf}}, which builds on the DP code of FSU's Statistical Shape Analysis and Modeling Group. Our network was implemented on TensorFlow. All computation times using the different algorithms were recorded on an Intel Xeon X5650 2.66 GHz CPU with a Gigabyte GeForce GTX 1060 1582 MHz GPU. A comparision of the computation times for the various algorithms can be found in Table~\ref{timing}. One can clearly see that the trained network is several orders of magnitude faster than both DP and the exact algorithm.
\begin{table}[h]
\centering
    \begin{minipage}{.45\textwidth}
        \centering
 \begin{tabular}{||c |c c c c||}
     \hline 
       & Exact & DP &  DL (CPU)& DL (GPU) \\ [0.5ex] 
     \hline\hline
        1D &  $2 \times 10^3$ & $5 \times 10^2$ & $5 \times 10^{-2}$ & $2 \times 10^{-2}$ \\ 
        2D &  $2 \times 10^5$ & $8 \times 10^2$ & $2 \times 10^{-1}$ & $3 \times 10^{-2}$  \\
     \hline
    \end{tabular}
    \vspace{.1cm}
    \caption{{\footnotesize Computation time for one SRV distance using several different algorithms, in milliseconds.}}
    \label{timing}
    \end{minipage}
\end{table}

\subsection{Evaluation Method} 
\textbf{For functions and 2D curves:} To evaluate the trained network's accuracy for functions and 2D curves, we use the mean relative error (MRE) between its output and the true SRV distances on a test set, computed via the exact algorithm. As a secondary measure of estimation quality, we use the Pearson correlation coefficient 
\begin{equation*}
    \rho_{y \widehat{y}} = \frac{\sum\limits_{i=1}^N(y_i-\overline y)(\widehat{y_i}-\overline{\widehat{y}})}{\sqrt{\sum\limits_{i=1}^N(y_i-\overline y)^2}\sqrt{\sum\limits_{i=1}^N(\widehat{y_i}-\overline{\widehat{y}})^2}}
\end{equation*}
between the network's output and exact distances on a test set of functions or 2D curves. Here $N$ is the number of training samples, $\{y_i\}_{i=1}^N$ are the exact distances, $\{\widehat{y_i}\}_{i=1}^N$ are the outputs of the network, with $\overline{y} = \frac{1}{N}\sum_{i=1}^N y_i$ and $\overline{\widehat{y}} = \frac{1}{N}\sum_{i=1}^N \widehat{y_i}$ being their respective sample means. A low MRE and a strong positive correlation coefficient indicates a good performance of the network.

\textbf{For 3D curves:} In addition to the results for functions and planar curves, we present preliminary results for curves in $\mathbb R^3$. As the computational complexity of the exact algorithm is orders of magnitude higher for curves in $\mathbb R^3$ when compared to the case of functions and curves in $\R^2$, we only label 3D curves with DP distances. Consequently we can only evaluate the CNN's performance for 3D curves via the correlation coefficient between its output and DP distances.

To avoid bias in our results, {\bf elements of the test set are never contained in the training set}, which is used solely for the purpose of calibrating the network. Furthermore, to assess our network's generalization capabilities, we make sure that {\bf the trained network is tested on data that is significantly different compared to the data used for training}, c.f. Figures \ref{fig:example_functions}~and~\ref{fig:ex_2dcurves}.

\subsection{Experiments with functions}
\textbf{Datasets:} First, we tested our network's ability to predict SRV distances for functions, using both synthetic and real data. The synthetic data was created by generating functions sampled at $90$ evenly spaced points on the unit interval with random arc length. We note that the shape class of a function modulo reparametrizations is entirely determined by its local maxima and minima, as it is determined by its constant speed parameterization which is a linear interpolation between the local maxima and minima. Thus, the synthetic function data is generated by drawing the number of extrema for our function from a $\mathcal{N}(\mu, \sigma)$ distribution, randomly assigning values for these extrema, and then randomly choosing a function with $90$ breakpoints from the shape class determined by the generated extrema. We created two different synthetic datasets: the first one with parameters $(\mu, \sigma) = (18, 6)$, and the second with $(\mu, \sigma) = (30, 10)$. These datasets, dubbed Synthetic I and Synthetic II respectively, each contain 100,000 pairs of functions labelled with their exact distances, partitioned into 99,000 training cases and 1,000 testing cases. For the real dataset, we use CPC Global Unified Precipitation data from the NOAA/OAR/ESRL PSL, Boulder, Colorado, USA\footnote{\href{https://psl.noaa.gov/}{https://psl.noaa.gov/}}, from which we extracted 90 days of precipitation data across several locations and years. We randomly selected 400 samples from this database, computed exact pairwise distances, and partitioned them into a set of 89700 distances for training, and 9,900 for testing. See Fig.~\ref{fig:example_functions} for examples from the different datasets.

\begin{figure}[h!]
    \centering
        \includegraphics[width=.22\textwidth]{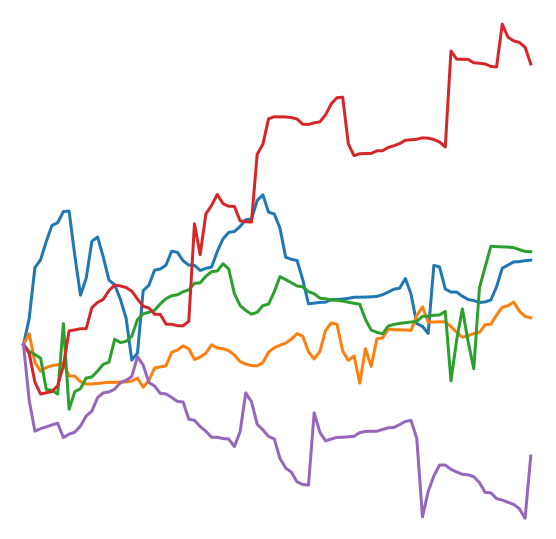}
        \includegraphics[width=.22\textwidth]{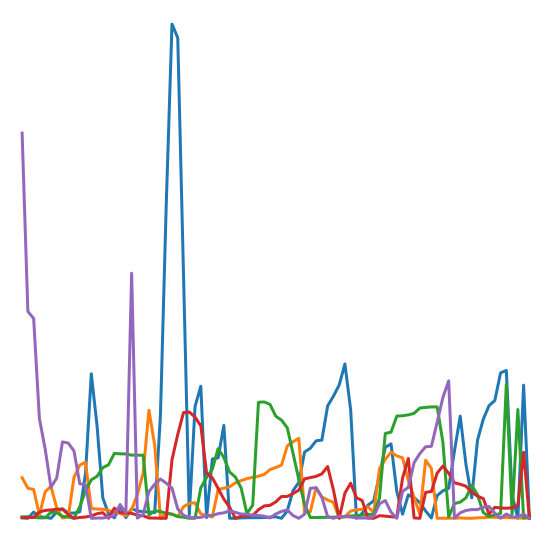}
        \includegraphics[width=.22\textwidth]{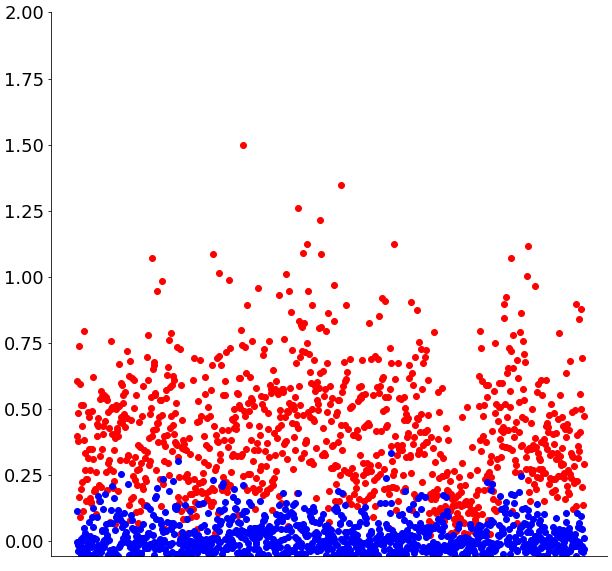} 
        \includegraphics[width=.22\textwidth]{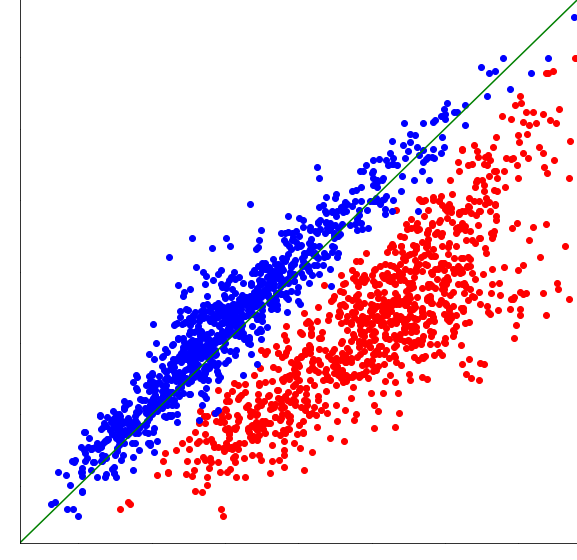} 
    \caption{\small  Five examples from Synthetic I (top-left) and the CPC Precipitation dataset (top-right). Third and fourth figure: 
    Comparison of DP (red) and Trained Network (blue). Scatter plot of relative errors for 1000 testing cases from the CPC Precipitation dataset, using a network trained on Synthetic I (bottom-left). Corresponding correlation plot for both methods, with exact distances on the $y$-axis, and estimated distances on the $x$-axis, and the line $y=x$ in green (bottom-right).
    \label{fig:example_functions}} 
\end{figure}

\textbf{Results:} First, we highlight the difference in performance between our trained network and DP, see Table~\ref{tab:DL_vs_DP_1} and Fig.~\ref{fig:example_functions}. When trained and validated on the same type of data, the network significantly outperforms DP across all three datasets, both in terms of the MRE and correlation coefficient with respect to the exact distances, see Table~\ref{tab:DL_vs_DP_1}.

To demonstrate our network's generalization capabilities, we trained it on one type of data and tested it on a different dataset, e.g., we trained on synthetic data but tested on CPC precipitation data. While this leads to a slight increase in prediction error, the network still outperforms DP on both measures by a large margin, see Table~\ref{tab:gen_results_1}.

\begin{table}[h!]
\centering
\begin{tabular}{||c|c c|c c||} 
 \hline
 \multirow{2}{*}{Dataset}& \multicolumn{2}{c|}{MRE}& \multicolumn{2}{c||}{Corr.}\\[0.5ex] 
 \cline{2-5} 
          &  DP      &  DL     &    DP    & DL      \\[0.5ex] 
 \hline
 Synthetic I & 0.44551 & 0.04346 & 0.84307 & 0.96770 \\ 
 Synthetic II & 0.45949 & 0.03806 & 0.87123 & 0.97053 \\ 
 CPC Precip. & 0.45853 & 0.03722 & 0.85452 & 0.96090 \\ 
 \hline
 \hline
\end{tabular}
\vspace{.1cm}
\caption{\footnotesize Comparison between DP and DL\label{tab:DL_vs_DP_1}}
\end{table}

\begin{table}[h!]
\centering
    \begin{minipage}{.45\textwidth}
        \centering
        
\begin{tabular}{||c c c c||} 
 \hline 
Training Set & Testing Set & MRE & Corr. \\ [0.5ex] 
 \hline
Synthetic I &  Synthetic II & 0.05520 & 0.96782 \\ 
Synthetic I & CPC Precip. & 0.08088 & 0.95264 \\ 
 \hline
Synthetic II & Synthetic I & 0.05206 & 0.96110 \\ 
 Synthetic II & CPC Precip. & 0.07093 & 0.94888 \\ 
 \hline 
\end{tabular}
\vspace{.1cm}
    \caption{\footnotesize Generalization results across several testing sets  \label{tab:gen_results_1}}
        \centering
    \end{minipage}
\end{table}

\subsection{Experiments with curves in $\R^2$}
\label{ssec:2dcurve_experiments}
\textbf{Datasets:} We used data from the MPEG-7\footnote{\href{https://dabi.temple.edu/external/shape/MPEG7/dataset.html}{https://dabi.temple.edu/external/shape/MPEG7/dataset.html}} and Swedish leaf datasets\footnote{\href{https://www.cvl.isy.liu.se/en/research/datasets/swedish-leaf/}{https://www.cvl.isy.liu.se/en/research/datasets/swedish-leaf/}}, which contain images of objects whose boundaries were extracted and treated as 2D curves, discretized with $100$ points, see Fig.~\ref{fig:ex_2dcurves}. To extract discretized boundary curves from these datasets, we binarized each image via Otsu's algorithm, then extracted vertices on the boundary using the Moore-Neighbor tracing algorithm, before downsampling to $100$ points.

We trained the network on 229162 distinct pairs of curves labelled with exact distances from the MPEG-7 dataset, which contains a diverse array of 2D shapes from many different shape classes, see Fig.~\ref{fig:ex_2dcurves}. We tested the network on two versions of the Swedish leaf dataset, called Swedish Leaf I and II respectively. Swedish Leaf I contains curves with arc length parametrizations, i.e., discretized with $n$ points that are uniformly distributed across the curve. Swedish Leaf II contains ``adversarial parametrizations", i.e., curves with $n$ points that are far from uniformly distributed across the curve, with many points concentrated on a small portion of the curve, see Fig.~\ref{fig:swedish_leaves_stdvsreparam}.

\begin{figure}[h!]
\begin{center}
    \includegraphics[height=0.2\textwidth]{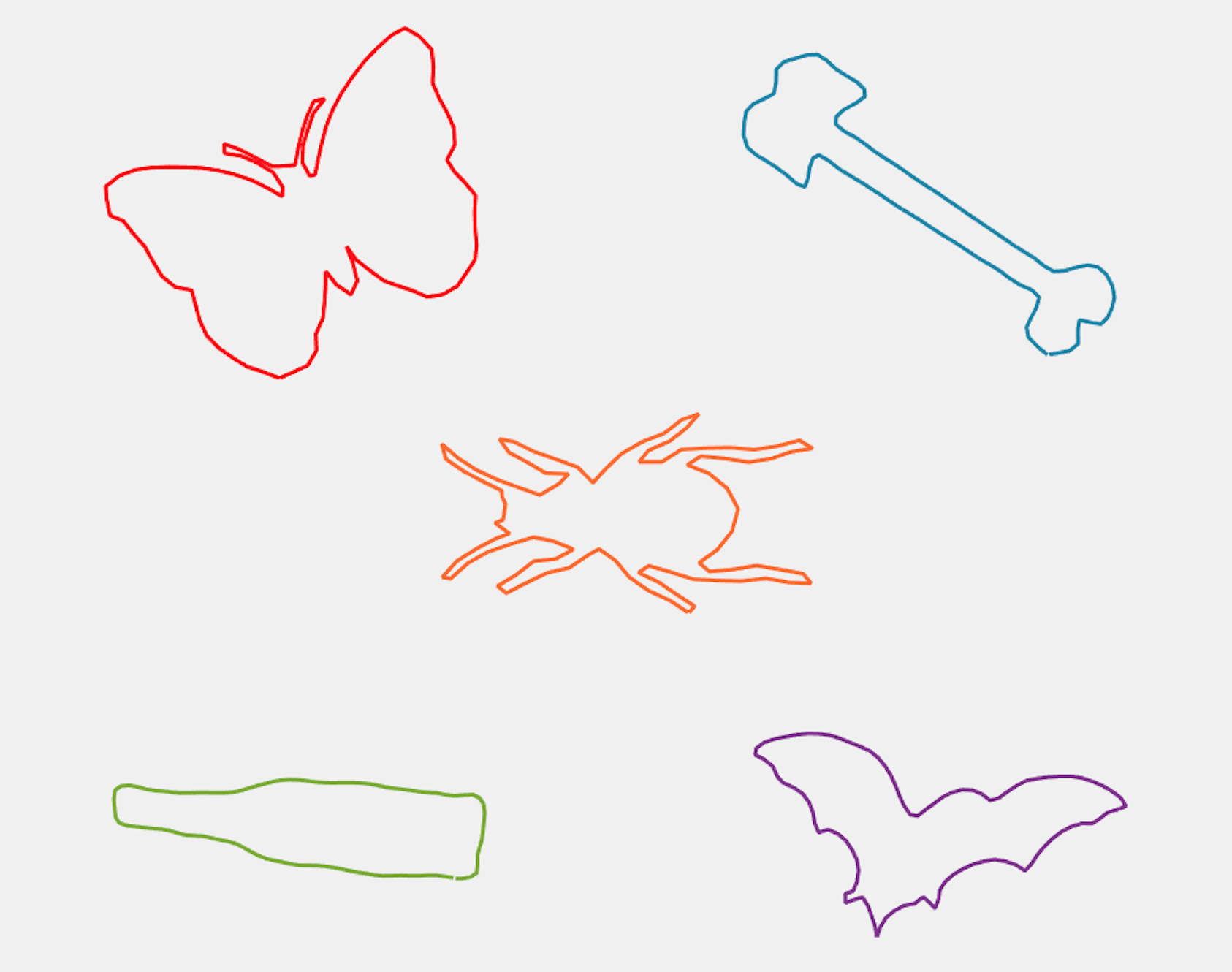}
    \includegraphics[height=0.2\textwidth]{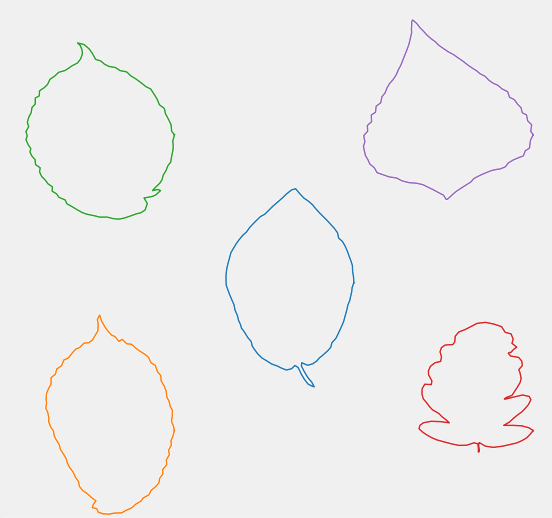}
\end{center}
\caption{Five examples from the MPEG-7 dataset (left). Five examples from the Swedish leaf dataset (right).}
\label{fig:ex_2dcurves}
\end{figure}

\begin{figure}[h!]
\begin{center}
    \includegraphics[width=0.23\textwidth]{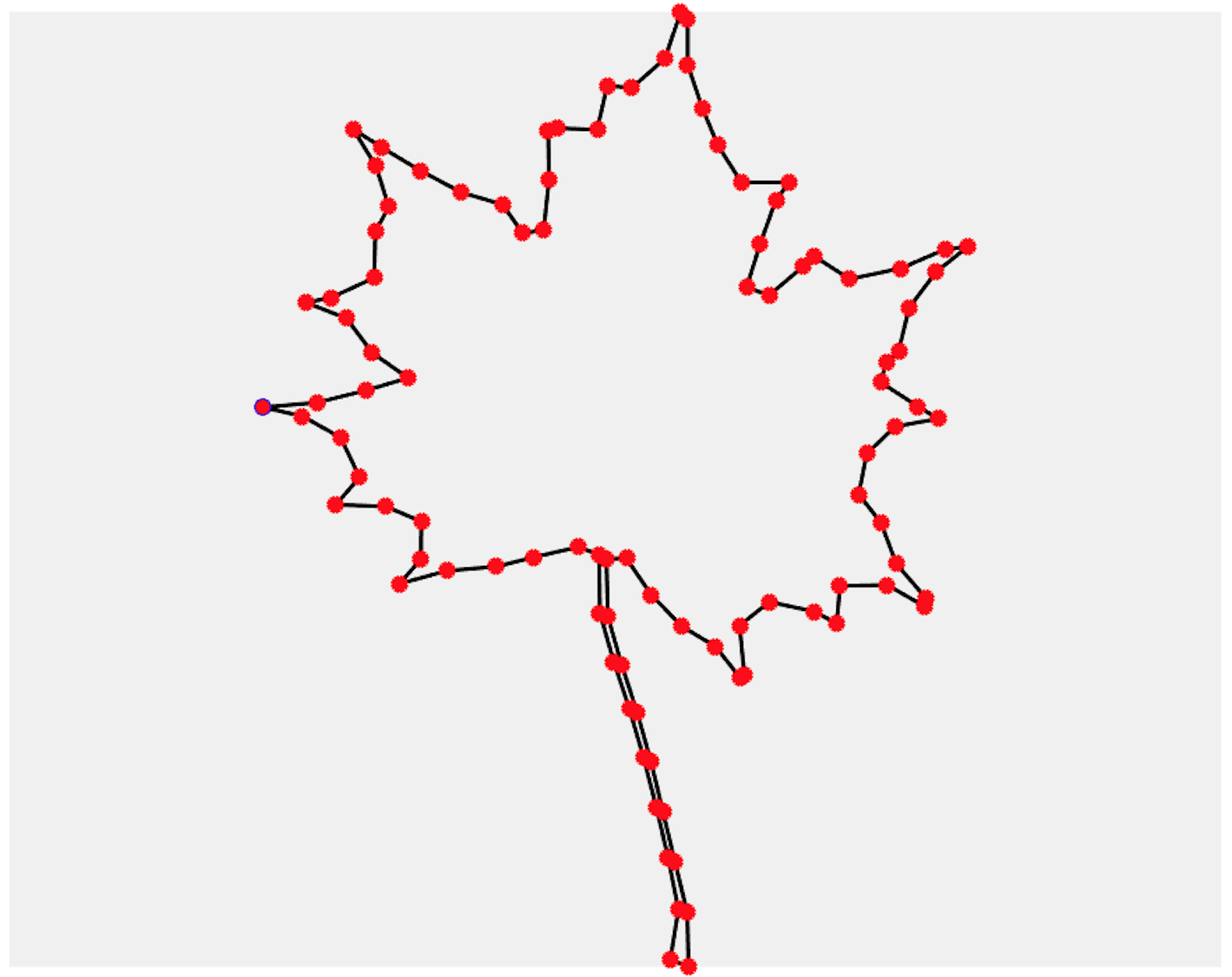} 
    \includegraphics[width=0.23\textwidth]{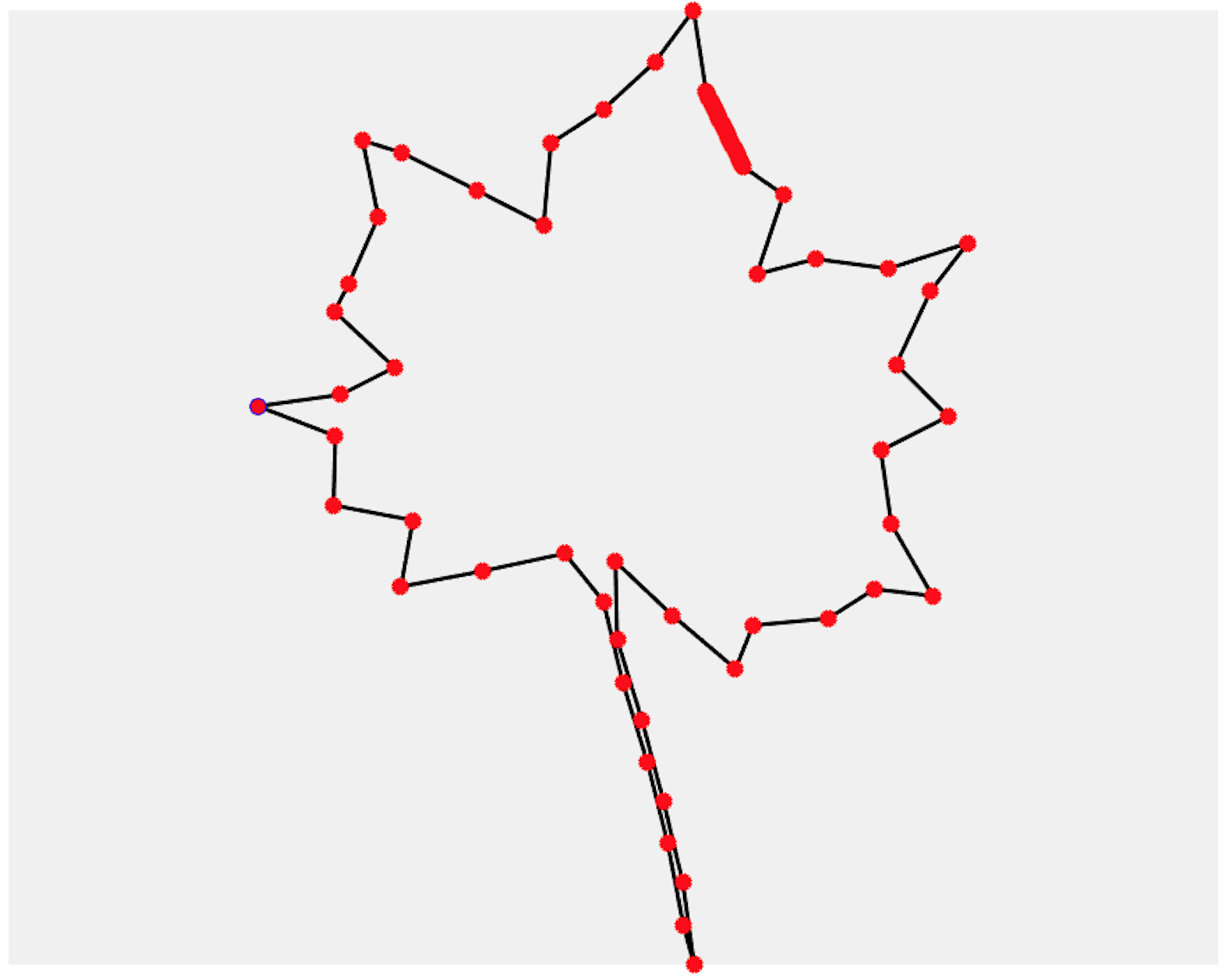}
\end{center}
\caption{Example of a curve from the Swedish Leaf I dataset, where curves have arc length parametrization (left), and from the Swedish Leaf II dataset with adversarial parametrizations (right).}
\label{fig:swedish_leaves_stdvsreparam}
\end{figure}

\textbf{Results:}  Due to the higher dimensionality and complexity of the data, the network's performance drops compared to the case of functions. However, we still obtain a high correlation coefficient with the exact distance across both datasets, namely 0.924 for Swedish Leaf I, and 0.917 for Swedish Leaf II. Meanwhile, the corresponding correlation coefficient for DP distances is 0.996 for Swedish Leaf I, but drops significantly to 0.899 for Swedish Leaf II. These observations show comparable performance between our DL framework and DP, with DP being more accurate for curves that are already well-aligned (e.g., for those in Swedish Leaf I), and DL being superior in terms of accuracy for data requiring larger reparametrizations (e.g., for curves in Swedish Leaf II). 

As yet another proof of concept for our DL approach, we perform an unsupervised clustering experiment using $40$ curves taken from the Swedish Leaf I dataset. These curves are evenly distributed across four categories of leaves. To cluster the curves, we compute all pairwise SRV distances using both our DL framework and the exact algorithm, and apply classical multidimensional scaling (CMDS) to the resulting pairwise distance matrices in order to obtain a 2D projection of the dataset, see Fig.~\ref{fig:Clusters}. While the resulting 2D visualizations are slightly different, the clusters produced are comparable.

\begin{figure}[h!]
    \begin{center}
        \includegraphics[width=0.23\textwidth]{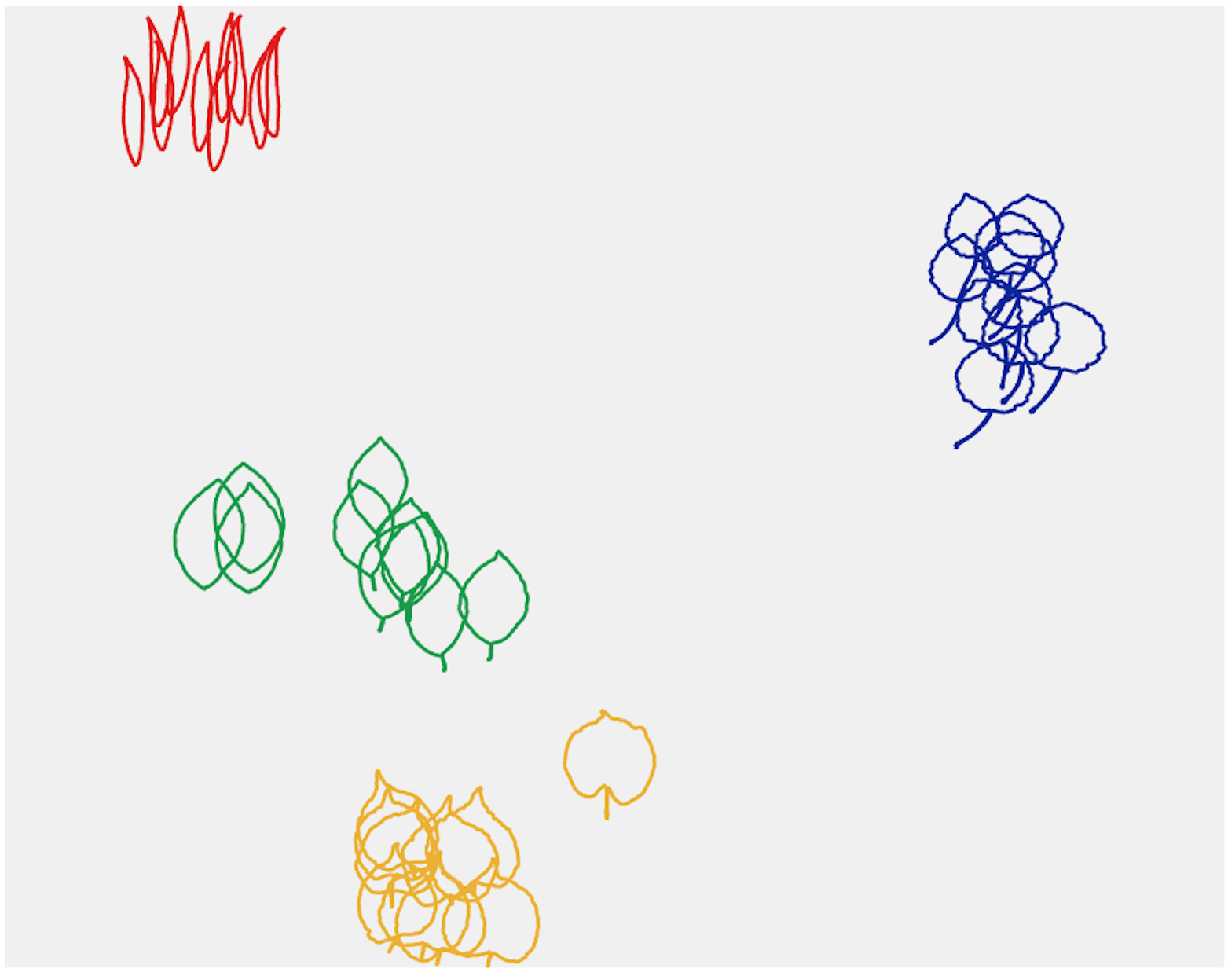}
        \includegraphics[width=0.23\textwidth]{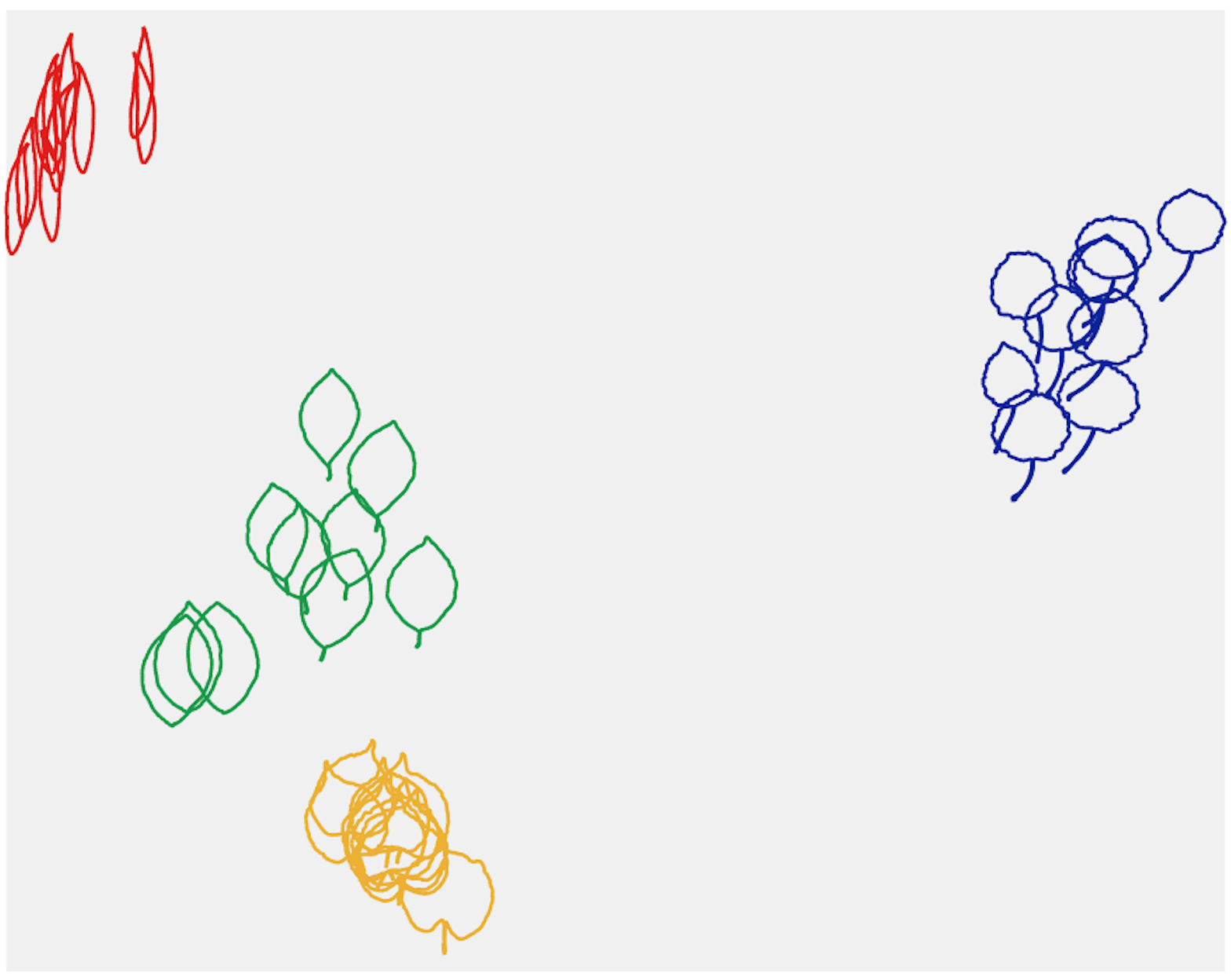}
    \end{center}
    \caption{\small CMDS clusters of 40 curves selected from the Swedish leaf dataset using exact distances (left) and DL distances (right). 
    \label{fig:Clusters}}
\end{figure}

\subsection{Preliminary experiments for curves in $\R^3$}
\textbf{Datasets:} Finally we present preliminary results for curves in $\R^3$. We use two distinct datasets of open 3D curves for the experiments: hurricane paths from the National Hurricane Center Data Archive\footnote{\href{https://www.nhc.noaa.gov/data/}{https://www.nhc.noaa.gov/data/}} and plant roots\footnote{\href{https://github.com/RSA-benchmarks/collaborative-comparison}{https://github.com/RSA-benchmarks/collaborative-comparison}} from which we only keep the taproots (i.e. main stem of the roots). All curves were discretized with $100$ points. We trained the network on 284622 distinct pairs of hurricane paths, labelled with distances computed with the DP algorithm. We validated the network on either a different testing set of hurricane paths, or on the dataset of taproots. The reason for choosing DP distances instead of exact distances for training the network is the high computational cost of calculating exact distances in the situation of 3D curves. 


\textbf{Results:} The correlation coefficient between the predicted DL and the DP distances on the test set of hurricane paths is 0.977, but drops significantly to 0.823 when tested on taproots. This drop-off in prediction quality can be most likely explained by the lack of sufficiently diverse samples in the training set, which limits the network's generalization capabilities. We expect that enriching the training set with 3D curves displaying more varied geometries will help to improve the network's performance. However, due to the scarcity of publicly available datasets of 3D curves, we leave it as future work to build a better training set for 3D curves and invest the required computational resources to train our network using SRV distance labels computed via the exact algorithm, as was done for functions and 2D curves.

\section{Conclusion}
\label{sec:conclusion}
We have introduced a supervised DL framework to compute SRV distances for curves in $\R^d$. The main advantage of our approach is that our trained CNN provides fast and accurate estimates of the SRV distance between pairs of geometric curves, without the need to find optimal reparametrizations. Moreover, we exploited the invariance of the SRV distance to shape-preserving actions in order to propose a shape-preserving data-augmentation based training strategy, which is a flexible and efficient procedure for creating and augmenting our training set. Empirical observations show that this training strategy allows our network to estimate SRV distances which are truly invariant to rotations and reparametrizations, while also reducing overfitting. Moreover, our experiments show that when compared to DP, our approach produces SRV distance estimates at a significantly lower numerical cost, while also being comparable, and sometimes superior, in terms of accuracy.

\subsection*{Acknowledgements}
We want to thank Martins Bruveris for helpful discussions during the preparation of this manuscript.

\section*{Appendix}
\begin{proof}[Proof of Theorem~\ref{thm:existence}]
Not taking into account the action of the group of rotations and considering only open curves, this result was shown in ~\cite{lahiri2015precise}, assuming that one of the curves is piecewise linear, and in \cite{BruverisOptimaMat} under the assumptions that both curves are of class $C^1$.  

In the case where $M=S^1$, by the definition of $\bar \Gamma(S^1)$, the existence of a pair of optimal $(\gamma_1,\gamma_2)\in \bar \Gamma(S^1) \times \bar \Gamma(S^1)$ is equivalent to the existence of an optimal $\tau\in S^1$, an optimal $O\in \operatorname{SO}(d)$ and a pair of optimal $\gamma^*_1,\gamma^*_2\in \overline{\Gamma}([0,1])$. 
Consider the function $F:S^1 \times \operatorname{SO}(d) \to \R$ given by 
\begin{equation*}
 F(\lambda,O)=\inf\limits_{\gamma^*_1,\gamma^*_2\in \overline{\Gamma}([0,1])} d_{Q}(c_1\circ\gamma^*_1,O\star(c_2\circ S_\lambda\circ\gamma^*_2)).    
\end{equation*}
We will first show that $F$ is continuous. Let $\tau\in S^1$, $O\in \operatorname{SO}(d)$ and $\epsilon>0$.
Since $C(S^1,\R^d)$ is dense in $L^2(S^1,\R^d)$, let $g\in C(S^1,\R^d)$ such that $ ||Q(c_2)-g||_{L^2}<\epsilon/4$. As $g$ is continuous on a compact domain it follows, by the Heine-Cantor theorem, that it is uniformly continuous. Thus, there exists $\delta>0$ such that for each $\theta\in S^1$ and each $\lambda\in S^1$ such that $|\lambda|\leq\delta,$  we have $|g(\theta)-g(S_\lambda(\theta))|<\epsilon/4.$ Thus, for each $\lambda$ such that  $|\lambda|\leq\delta$, we have 
\begin{align*}
    ||g-g\circ S_\lambda||^2_{L^2}&=\int_{S^1}|g(\theta)-g(S_\lambda(\theta))|^2d\theta\\&<\int_{S^1}(\epsilon/4)^2 d\theta=(\epsilon/4)^2.
\end{align*}
Pick this $\delta$ and let $\lambda\in S^1$ such that $|\tau-\lambda|<\delta.$ Thus, \begin{align*}
    ||g\circ S_\tau-g\circ S_\lambda||_{L^2}=||g-g\circ S_{\lambda-\tau}||_{L^2}<\epsilon/4.
\end{align*}
By a change of variable argument, we can show for any $g_1,g_2\in C(S^1,\R)$ and $\theta\in S^1$, we have 
\begin{equation*}
    ||g_1\circ S_\theta-g_2\circ S_\theta||_{L^2}=||g_1-g_2||_{L^2}.
\end{equation*} 
Furthermore, it is easy to show that for any $c\in \operatorname{AC}(S^1,\mathbb{R}^d)$ and any $\theta\in S^1$, we have $Q(c\circ S_\theta)=Q(c)\circ S_\theta$. On the other hand, the action of the rotation group on curves induces a corresponding action on their SRV transform which we write for any $O \in \operatorname{SO}(d)$ as $Q(O\star c) = O\cdot Q(c)$, where we have specifically that $O\cdot Q(c)(\cdot) = \frac{1}{\sqrt{|c'(\cdot)|}} O c'(\cdot)$. Note that this action on SRV transforms is by isometry for $\|\cdot\|_{L^2}$. 

Now, for any $(\lambda, O') \in S^1 \times \operatorname{SO}(d)$ with $|\tau-\lambda|<\delta$ and $||O-O'|| < \epsilon/(4||Q(c_2)||_{L^2})$ (for the operator norm on matrices), we can write
\begin{align*}
|F(\lambda&,O)-F(\tau,O')|\\
\leq & ||O\cdot Q(c_2\circ S_\lambda)-O'\cdot Q(c_2\circ S_\tau)||_{L^2} \\
=&||O\cdot Q(c_2)\circ S_\lambda-O' \cdot Q(c_2)\circ S_\tau||_{L^2}\\
\leq& ||O \cdot Q(c_2)\circ S_\lambda - O' \cdot Q(c_2)\circ S_\lambda ||_{L^2}\\ 
&+ ||O' \cdot Q(c_2)\circ S_\lambda - O'\cdot Q(c_2)\circ S_\tau||_{L^2}.
\end{align*}
For the first term on the right hand side, we can see that 
\begin{equation*}
  ||O \cdot Q(c_2)\circ S_\lambda - O' \cdot Q(c_2)\circ S_\lambda ||_{L^2} \leq ||O-O'||.||Q(c_2)\circ S_\lambda||_{L^2},
\end{equation*}
and since $||Q(c_2)\circ S_\lambda||_{L^2} =||Q(c_2)||_{L^2}$, we can bound this term by $\epsilon/4$. On the other hand, we have
\begin{align*}
&||O' \cdot Q(c_2)\circ S_\lambda - O'\cdot Q(c_2)\circ S_\tau||_{L^2} \\ 
&=||Q(c_2)\circ S_\lambda - Q(c_2)\circ S_\tau||_{L^2}\\
&\leq ||Q(c_2)\circ S_\lambda-g\circ S_\lambda||_{L^2} +||g\circ S_\lambda-g\circ S_\tau||_{L^2} \\
&\phantom{||}+||g\circ S_\tau-Q(c_2)\circ S_\tau||_{L^2} \\
&=||Q(c_2)-g||_{L^2}+||g\circ S_\lambda-g\circ S_\tau||_{L^2} +||g-Q(c_2)||_{L^2} \\
&<\epsilon/4+\epsilon/4+\epsilon/4=3\epsilon/4,
\end{align*}
which finally leads to $|F(\lambda,O)-F(\tau,O')| <\epsilon$. Now, since $F$ is continuous on the compact set $S^1 \times \operatorname{SO}(d)$, there exists an optimal $\tau\in S^1$ and an optimal $O\in \operatorname{SO}(d)$ such that $F(\tau,O)=\inf_{S^1 \times \operatorname{SO}(d)} F$. Note that the curves $c_1$ and $O\star (c_2\circ S_\tau)$ belong to $\operatorname{AC}(S^1,\mathbb R^d)$ and thus in particular to $\operatorname{AC}([0,1],\mathbb R^d)$, and that by assumption, they are either both of class $C^1$ or one of them is piecewise linear. By the results of \cite{BruverisOptimaMat,lahiri2015precise}, there exist optimal $\gamma^*_1,\gamma^*_2\in \overline{\Gamma}([0,1])$ such that 
\begin{equation*}
 d_{\mathcal{S}}([c_1],[c_2])= d_{Q}(c_1\circ\gamma^*_1,O\star(c_2\circ S_\tau\circ\gamma^*_2)), 
\end{equation*}
which concludes the proof for the case of closed curves. The proof for open curves modulo reparametrizations and rotations can be done exactly as above, by considering a function $\tilde F$ that only depends on rotations.   
\end{proof}



\bibliographystyle{plain}

\begin{thebibliography}{10}

\bibitem{amor2015action}
Boulbaba~Ben Amor, Jingyong Su, and Anuj Srivastava.
\newblock Action recognition using rate-invariant analysis of skeletal shape
  trajectories.
\newblock {\em IEEE transactions on pattern analysis and machine intelligence},
  38(1):1--13, 2015.

\bibitem{bauer2014overview}
Martin Bauer, Martins Bruveris, and Peter~W Michor.
\newblock Overview of the geometries of shape spaces and diffeomorphism groups.
\newblock {\em Journal of Mathematical Imaging and Vision}, 50(1-2):60--97,
  2014.

\bibitem{Bernal_2016_CVPR_Workshops}
Javier Bernal, Gunay Dogan, and Charles~R Hagwood.
\newblock {Fast Dynamic Programming for Elastic Registration of Curves}.
\newblock In {\em The IEEE Conference on Computer Vision and Pattern
  Recognition (CVPR) Workshops}, June 2016.

\bibitem{BruverisOptimaMat}
Martins Bruveris.
\newblock {Optimal Reparametrizations in the Square Root Velocity Framework}.
\newblock {\em SIAM Journal on Mathematical Analysis}, 48, 07 2015.

\bibitem{dai2019analyzing}
Mengyu Dai, Zhengwu Zhang, and Anuj Srivastava.
\newblock Analyzing dynamical brain functional connectivity as trajectories on
  space of covariance matrices.
\newblock {\em IEEE transactions on medical imaging}, 39(3):611--620, 2019.

\bibitem{Dogan_2015_CVPR}
Gunay Dogan, Javier Bernal, and Charles~R Hagwood.
\newblock {A Fast Algorithm for Elastic Shape Distances Between Closed Planar
  Curves}.
\newblock In {\em The IEEE Conference on Computer Vision and Pattern
  Recognition (CVPR)}, June 2015.

\bibitem{DL_SRV}
E.~Hartman, Y.~Sukurdeep, E.~Klassen, N.~Charon, and M.~Bauer.
\newblock {S}upervised{DL-SRVF}distances:
  \url{https://github.com/emmanuel-hartman/supervisedDL-SRVFdistances}, 2021.

\bibitem{huang2016riemannian}
Wen Huang, Kyle~A Gallivan, Anuj Srivastava, and Pierre-Antoine Absil.
\newblock Riemannian optimization for registration of curves in elastic shape
  analysis.
\newblock {\em Journal of Mathematical Imaging and Vision}, 54(3):320--343,
  2016.

\bibitem{kingma2014adam}
Diederik~P Kingma and Jimmy Ba.
\newblock {Adam: A Method for Stochastic Optimization}, 2014.

\bibitem{unsupervised}
K.~{Koneripalli}, S.~{Lohit}, R.~{Anirudh}, and P.~{Turaga}.
\newblock {Rate-Invariant Autoencoding of Time-Series}.
\newblock In {\em ICASSP 2020 - 2020 IEEE International Conference on
  Acoustics, Speech and Signal Processing (ICASSP)}, pages 3732--3736, 2020.

\bibitem{lahiri2015precise}
Sayani Lahiri, Daniel Robinson, and Eric Klassen.
\newblock {Precise Matching of PL Curves in $R^N$ in the Square Root Velocity
  Framework}.
\newblock {\em Geometry, Imaging and Computing}, 2, 01 2015.

\bibitem{lohit2019temporal}
Suhas Lohit, Qiao Wang, and Pavan Turaga.
\newblock Temporal transformer networks: Joint learning of invariant and
  discriminative time warping.
\newblock In {\em Proceedings of the IEEE Conference on Computer Vision and
  Pattern Recognition}, pages 12426--12435, 2019.

\bibitem{marron2015functional}
James~Stephen Marron, James~O Ramsay, Laura~M Sangalli, and Anuj Srivastava.
\newblock Functional data analysis of amplitude and phase variation.
\newblock {\em Statistical Science}, pages 468--484, 2015.

\bibitem{mio2007shape}
Washington Mio, Anuj Srivastava, and Shantanu Joshi.
\newblock On shape of plane elastic curves.
\newblock {\em International Journal of Computer Vision}, 73(3):307--324, 2007.

\bibitem{Nunez_2020_CVPR_Workshops}
Elvis Nunez and Shantanu~H Joshi.
\newblock Deep learning of warping functions for shape analysis.
\newblock In {\em Proceedings of the IEEE/CVF Conference on Computer Vision and
  Pattern Recognition (CVPR) Workshops}, June 2020.

\bibitem{seetharam2019structured}
Kaushik~Koneripalli Seetharam.
\newblock {\em Structured Disentangling Networks for Learning Deformation
  Invariant Latent Spaces}.
\newblock PhD thesis, Arizona State University, 2019.

\bibitem{srivastava2016functional}
Anuj Srivastava and Eric Klassen.
\newblock {\em {Functional and Shape Data Analysis}}.
\newblock Springer Series in Statistics. Springer New York, 2016.

\bibitem{srivastava2010shape}
Anuj Srivastava, Eric Klassen, Shantanu~H Joshi, and Ian~H Jermyn.
\newblock Shape analysis of elastic curves in euclidean spaces.
\newblock {\em IEEE Transactions on Pattern Analysis and Machine Intelligence},
  33(7):1415--1428, 2010.

\bibitem{su2014statistical}
Jingyong Su, Sebastian Kurtek, Eric Klassen, Anuj Srivastava, et~al.
\newblock Statistical analysis of trajectories on riemannian manifolds: bird
  migration, hurricane tracking and video surveillance.
\newblock {\em The Annals of Applied Statistics}, 8(1):530--552, 2014.

\bibitem{younes1998computable}
Laurent Younes.
\newblock Computable elastic distances between shapes.
\newblock {\em SIAM Journal on Applied Mathematics}, 58(2):565--586, 1998.

\end{thebibliography}

\end{document}